\newcommand{\citep}[1]{\cite{#1}}
\newcommand{\citet}[1]{\citeauthor{#1}~\shortcite{#1}}
\theoremstyle{plain}
\newtheorem{thm}{Theorem}
\newtheorem{theorem}[thm]{Theorem}
\newtheorem{corollary}[thm]{Corollary}
\newtheorem{prop}[thm]{Proposition}
\theoremstyle{definition}
\newtheorem{defi}{Definition}
\newtheorem{definition}[defi]{Definition}
\DeclareMathOperator*{\argmax}{arg\,max}
\newcommand{\bm}[1]{\mathbf{#1}}
\DeclareMathAlphabet{\mathbfsf}{\encodingdefault}{\sfdefault}{bx}{n}
\def\P{P}
\def\x{x}
\def\xs{\bm{x}}
\def\X{X}
\def\Xs{\bm{X}}
\def\Xd{\mathcal{X}}
\def\xl{\mathsf{x}}
\def\ys{\bm{y}}
\def\Ys{\bm{Y}}
\def\Yd{\mathcal{Y}}
\def\yl{\mathsf{y}}
\def\zs{\bm{z}}
\def\Zs{\bm{Z}}
\def\Zd{\mathcal{Z}}
\def\es{\bm{e}}
\def\Es{\bm{E}}
\def\qs{\bm{q}}
\def\Q{Q}
\def\Qs{\bm{Q}}
\newcommand{\mln}[1]{#1}
\newcommand{\dom}{\mathbfsf{D}}
\newcommand{\s}{\mathtt{Smokes}}
\renewcommand{\c}{\mathtt{Cancer}}
\newcommand{\f}{\mathtt{Friends}}
\newcommand{\alice}{\mathsf{A}}
\newcommand{\bob}{\mathsf{B}}
\newcommand{\charlie}{\mathsf{C}} 
\newcommand{\guy}[1]{}
\newcommand{\mat}[1]{}
\newcommand\mynobreakpar{\par\nobreak\@afterheading} 
\begin{document}
%
\title{Tractability through Exchangeability: \\ A New Perspective on Efficient Probabilistic Inference}

\author{
Mathias Niepert\thanks{Both authors contributed equally to the paper. Guy Van den Broeck is also at KU~Leuven, Belgium.} \\
Computer Science and Engineering\\University of Washington, Seattle\\ \texttt{mniepert@cs.washington.edu}
\And 
Guy Van den Broeck\footnotemark[1]\\
Computer Science Department\\University of California, Los Angeles\\ \texttt{guyvdb@cs.ucla.edu}
}

\maketitle

\begin{abstract}
\begin{quote}
Exchangeability is a central notion in statistics and probability theory. The assumption that an infinite sequence of data points is exchangeable is at the core of Bayesian statistics. However, \emph{finite} exchangeability as a statistical property that renders probabilistic inference tractable is less well-understood. We develop a theory of finite exchangeability and its relation to tractable probabilistic inference. The theory is complementary to that of independence and conditional independence. We show that tractable inference in probabilistic models with high treewidth and millions of variables can be explained with the notion of finite (partial) exchangeability. We also show that existing lifted inference algorithms implicitly utilize a combination of conditional independence and partial exchangeability. 
\end{quote}
\end{abstract}

\section{Introduction}

Probabilistic graphical models such as Bayesian and Markov networks  explicitly represent \emph{conditional independencies} of a probability distribution with their structure~\citep{Pearl:1988,Lauritzen:1996,Koller:2009,darwiche2009modeling}. Their wide-spread use in research and industry  can largely be attributed to this structural property and their declarative nature, separating representation and inference algorithms. Conditional independencies often lead to a more concise representation and facilitate efficient algorithms for parameter estimation and probabilistic inference. It is well-known, for instance, that probabilistic graphical models with a tree structure admit efficient inference.
In addition to conditional independencies, modern inference algorithms exploit \emph{contextual independencies}~\citep{boutilier1996context} to speed up probabilistic inference.

The time complexity of classical probabilistic inference algorithms is exponential in the treewidth~\citep{robertson1986graph} of the graphical model. Independence and its various manifestations often reduce treewidth and treewidth has been used in the literature as the decisive factor for assessing the tractability of probabilistic inference~(cf.~\citet{Koller:2009}, \citet{darwiche2009modeling}).
However, recent algorithmic developments have shown that inference in probabilistic graphical models can be highly tractable, 
even in high-treewidth models without any conditional independencies.
For instance, lifted probabilistic inference algorithms~\citep{Poole2003,Kersting:2012} often perform efficient inference in densely connected graphical models with millions of random variables.
With the success of lifted inference, understanding these algorithms and their tractability on a more fundamental level has become a central challenge. The most pressing question concerns the underlying statistical principle that allows inference to be tractable in the absence of independence.

The present paper contributes to a deeper understanding of the statistical properties that render inference tractable.
We consider an inference problem tractable when it is solved by an efficient algorithm, running in time polynomial in the number of random variables.
The crucial contribution is a comprehensive theory  that relates the notion of finite partial \emph{exchangeability}~\citep{Diaconis:1980} to tractability.
One instance is full exchangeability where the distribution is invariant under variable permutations. 
We develop a theory of exchangeable decompositions that results in novel tractability conditions. 
Similar to conditional independence, partial exchangeability decomposes a probabilistic model so as to facilitate efficient inference. 
Most importantly, the notions of conditional independence and partial exchangeability are complementary, and when combined, define a much larger class of tractable models than the class of models rendered tractable by conditional independence alone.

Conditional and contextual independence are such powerful concepts because they are \emph{statistical} properties of 
the distribution, regardless of the representation used.
Partial exchangeability is such a statistical property that is independent of any representation, be it a joint probability table, a Bayesian network, or a statistical relational model.
We introduce novel forms of exchangeability, discuss their sufficient statistics, and efficient inference algorithms.
The resulting exchangeability framework allows us to state known liftability results as corollaries, providing a first statistical characterization of exact lifted inference.
As an additional contribution, we connect the semantic notion of exchangeability to \emph{syntactic} notions of tractability by showing that liftable statistical relational models have the required exchangeability properties due to their syntactic symmetries.
We thereby unify notions of lifting from the exact and approximate inference community into a common framework. 




\section{A Case Study: Markov Logic}

The analysis of exchangeability and tractable inference is developed in the context of arbitrary discrete probability distributions, independent of a particular representational formalism. Nevertheless, for the sake of accessibility, we will provide examples and intuitions for Markov logic, a well-known statistical relational language that exhibits several forms of exchangeability.
Hence, after the derivation of the theoretical results in each section, we apply the theory to the problem of inference in Markov logic networks.
This also allows us to link the theory to existing results from the lifted probabilistic inference~literature.

\subsection{Markov Logic Networks} \label{s:mlns}

We first introduce some standard concepts from function-free first-order logic.
An \textit{atom} $\P(t_1, \dots , t_n)$ consists of a predicate $\P/n$ of 
arity $n$ followed by $n$ argument terms $t_i$, which are either \textit{constants}, $\{\alice,\bob,\dots\}$ or \textit{logical variables} $\{\xl, \yl, \dots\}$.
A \emph{unary atom} has one argument and a \emph{binary atom} has two.
A formula combines atoms with connectives (e.g., $\land$, $\Leftrightarrow$).
A formula is {\em ground} if it contains no logical variables. The groundings of a formula are obtained by instantiating the variables with particular constants.

Many statistical relational languages have been proposed in recent years~\citep{Getoor:2007,DeRaedt2008-PILP}. We will work with one such language, called \emph{Markov logic networks}~(MLN)~\citep{richardson2006markov}.
An MLN is a set of tuples $(w,f)$, where $w$ is a real number representing a weight and $f$ is a formula in first-order logic. 
Let us, for example, consider the following MLN
\mln{ \begin{align}
1.3 \quad & \s(\xl) \Rightarrow \c(\xl) \label{f:smokescancer}\\
1.5 \quad & \s(\xl) \land \f(\xl,\yl) \Rightarrow \s(\yl) \label{f:smokesfriendsmokes}
\end{align}}
which states that smokers are more likely to (\ref{f:smokescancer})~get cancer and (\ref{f:smokesfriendsmokes})~be friends with other smokers.

Given a domain of constants $\dom$, a first-order MLN \(\Delta\) induces a {\em grounding}, which is the MLN obtained by replacing each formula in \(\Delta\) with all its groundings (using the same weight).
Take for example the domain $\dom = \{\alice,\bob\}$ (e.g., two people, Alice and Bob), the above first-order MLN represents the following grounding.
\mln{
  \begin{align*}
    1.3 \quad & \s(\alice) \Rightarrow \c(\alice)  \\
    1.3 \quad & \s(\bob) \Rightarrow \c(\bob)  \\
    1.5 \quad & \s(\alice) \land \f(\alice,\alice) \Rightarrow \s(\alice)  \\
    1.5 \quad & \s(\alice) \land \f(\alice,\bob) \Rightarrow \s(\bob)  \\
    1.5 \quad & \s(\bob) \land \f(\bob,\alice) \Rightarrow \s(\alice)  \\
    1.5 \quad & \s(\bob) \land \f(\bob,\bob) \Rightarrow \s(\bob)
  \end{align*}
}
This ground MLN contains eight different random variables, which correspond to all groundings of 
atoms \(\s(\xl)\), \(\c(\xl)\) and \(\f(\xl,\yl)\). This leads to a distribution over \(2^8\) possible worlds. The weight
of each world is the product of the expressions \(\exp(w)\), where \((w,f)\) is a ground MLN formula and \(f\) is satisfied by the world. 
The probabilities of worlds are obtained by normalizing their weights.
Without loss of generality~\citep{Jha2010}, we assume that first-order formulas contain no constants.

\subsection{Lifted Probabilistic Inference}

The advent of statistical relational languages such as Markov logic has motivated a new class of \emph{lifted inference} algorithms~\citep{Poole2003}. These algorithms exploit the high-level structure and symmetries of the first-order logic formulas to speed up inference~\citep{Kersting:2012}.
Surprisingly, they perform tractable inference even in the absence of conditional independencies.
For example, when interpreting the above MLN as an (undirected) probabilistic graphical model, all pairs of random variables in $\{\s(\alice), \s(\bob),\dots\}$ are connected by an edge due to the groundings of Formula~\ref{f:smokesfriendsmokes}. The model has no conditional or contextual independencies between the $\s$ variables.
Nevertheless, lifted inference algorithms exactly compute its single marginal probabilities in time linear in the size of the corresponding graphical model~\citep{VdBIJCAI11}, scaling up to millions of random variables.

As lifted inference research makes algorithmic progress, the quest for the source of tractability and its theoretical properties becomes increasingly important.
For exact lifted inference, most theoretical results are based on the notion of domain-lifted inference~\citep{VdBNIPS11}.
\begin{definition}[Domain-lifted]
  Domain-lifted inference algorithms run in time polynomial in $|\dom|$.
\end{definition}
\noindent Note that domain-lifted algorithms can be exponential in other parameters, such as the number of formulas and predicates.
Our current understanding of exact lifted inference is that syntactic properties of MLN formulas permit domain-lifted inference~\citep{VdBNIPS11,JaegerStarAI12,taghipour2013completeness}. We will review these results where relevant.
Moreover, the (fractional) automorphisms of the graphical model representation have been related to lifted inference~\citep{niepertorbits,hai2012automorphism,noessner:2013,mladenov2013lifted}.
While there are deep connections between automorphisms and exchangeability~\citep{niepertorbits,niepert2013symmetry,hai2012automorphism,BuiHB12}, we refer these to future work.

\begin{figure}[t]
\begin{center}
\includegraphics[width=0.14\textwidth]{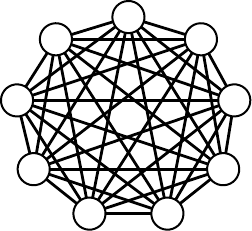}
\caption{\label{fig:symmetric} An undirected graphical model with $9$ finitely exchangeable Bernoulli variables. There are no (conditional) independencies that hold among the variables.}
\end{center}
\end{figure} 

\section{Finite Exchangeability}

This section provides some background on the concept of finite partial exchangeability. We proceed by showing that particular forms of finite exchangeability permit tractable inference.
For the sake of simplicity and to provide links to statistical relational models such as MLNs, we present the theory for \emph{finite} sets of (upper-case) \emph{binary} random variables $\Xs = \{X_1,\dots,X_n\}$. However, the theory applies to all distributions over finite valued discrete random variables.
Lower-case $\xs$ denote an assignments to $\Xs$.

We begin with the most basic form of exchangeability.
\begin{definition}[Full Exchangeability]
\label{full-exch}
A set of variables $\Xs = \{X_1,...,X_n\}$ is \emph{fully exchangeable} if and only if 
$\Pr(\X_1 = \x_1,\dots,\X_n = \x_n) = \Pr(\X_1 = \x_{\pi(1)},\dots,\X_n = \x_{\pi(n)})$
for all permutations $\pi$ of $\{1, \dots, n\}$.
\end{definition}
\noindent Full exchangeability is best understood in the context of a finite sequence of binary random variables such as a number of coin tosses. 
Here, exchangeability means that it is only the number of heads that matters and not their particular order. Figure~\ref{fig:symmetric} depicts an undirected graphical model with $9$ finitely exchangeable dependent Bernoulli variables.

\subsection{Finite Partial Exchangeability}

The assumption that all variables of a probabilistic model are exchangeable is often too strong. Fortunately, exchangeability can be generalized to the concept of partial exchangeability using the notion of a \emph{sufficient statistic} \citep{DnF:1980,lauritzen:1984,Lauritzen:1988}. 
Particular instances of exchangeability such as full finite exchangeability correspond to particular statistics.

\begin{definition}[Partial Exchangeability]
Let  $\mathcal{D}_i$ be the domain of $X_i$, and let $\mathcal{T}$ be a finite set. A set of random variables $\Xs$ is  \emph{partially exchangeable} with respect to the statistic $T: \mathcal{D}_1\times\dots\times \mathcal{D}_n\rightarrow \mathcal{T}$ if and only if 
$$T(\xs)=T(\xs') \mbox { implies } \Pr(\xs)=\Pr(\xs').$$ 
\end{definition}

\noindent The following theorem states that the joint distribution of a set of random variables that is partially exchangeable with a statistic $T$ is a unique mixture of uniform distributions.

\begin{theorem}[\emph{\citet{Diaconis:1980}}]
  \label{theorem-param-pe}
  Let $\mathcal{T}$ be a finite set and let $T: \{0,1\}^n\rightarrow\mathcal{T}$ be a statistic of a partially exchangeable set $\Xs$.
  Moreover, let $S_t = \left\{\xs \in \mathcal{D}_1\times\dots\times \mathcal{D}_n \mid T(\xs)=t\right\}$, let $U_t$ be the uniform distribution over $S_t$, and let $\Pr(S_t) =  \Pr(T(\xs)=t)$. Then,
  \begin{align*}
  \Pr(\Xs) = \sum_{t \in \mathcal{T}} \Pr(S_t) U_t(\Xs).
  \end{align*}
\end{theorem}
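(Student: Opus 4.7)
The plan is to use partial exchangeability to show that $\Pr$ restricted to each preimage $S_t$ is a constant function, and then to rewrite this constant in two ways: as a ratio $\Pr(S_t)/|S_t|$, which is exactly the uniform mass on $S_t$ scaled by the event probability.

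First I would observe that the sets $\{S_t\}_{t \in \mathcal{T}}$ form a partition of $\mathcal{D}_1 \times \dots \times \mathcal{D}_n$, since $T$ is a function, so every assignment $\xs$ lies in exactly one $S_t$, namely $S_{T(\xs)}$. Next, I would fix $t \in \mathcal{T}$ and apply the definition of partial exchangeability: for any $\xs, \xs' \in S_t$ we have $T(\xs) = T(\xs') = t$, hence $\Pr(\xs) = \Pr(\xs')$. Therefore there exists a scalar $p_t$ such that $\Pr(\xs) = p_t$ for every $\xs \in S_t$. Summing over $S_t$ gives $\Pr(S_t) = \sum_{\xs \in S_t} \Pr(\xs) = |S_t| \, p_t$, so $p_t = \Pr(S_t)/|S_t|$ whenever $S_t$ is nonempty (and the case $S_t = \emptyset$ contributes nothing to either side).

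To finish, I would evaluate the claimed mixture at an arbitrary assignment $\xs$. Since $U_t(\xs) = 1/|S_t|$ if $\xs \in S_t$ and $0$ otherwise, the sum collapses to a single term:
\begin{align*}
\sum_{t \in \mathcal{T}} \Pr(S_t)\, U_t(\xs) \;=\; \Pr(S_{T(\xs)}) \cdot \frac{1}{|S_{T(\xs)}|} \;=\; p_{T(\xs)} \;=\; \Pr(\xs).
\end{align*}
This holds for every $\xs$, establishing the identity as distributions over $\Xs$.

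There is no substantial obstacle; the whole argument rests on the one-line observation that $\Pr$ is constant on level sets of $T$, which is essentially a restatement of the definition. The only thing to be mildly careful about is handling empty $S_t$ (for which $\Pr(S_t) = 0$ and the uniform $U_t$ is undefined, but such terms can be dropped from the mixture without affecting the equality) and noting that uniqueness of the mixture representation follows because the coefficients are forced to be $\Pr(S_t)$ by marginalization over the disjoint supports of the $U_t$.
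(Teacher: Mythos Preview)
Your proof is correct; the argument that $\Pr$ is constant on each level set $S_t$ and therefore equals $\Pr(S_t)/|S_t|$ there is exactly what is needed, and your handling of empty $S_t$ and of uniqueness is fine. Note, however, that the paper does not supply its own proof of this theorem: it is stated with attribution to \citet{Diaconis:1980} and used as background, so there is no paper-side argument to compare against.
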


\noindent Hence, a distribution that is partially exchangeable with respect to a statistic $T$ can be parameterized as a unique mixture of uniform distributions. We will see that several instances of partial exchangeability render probabilistic inference tractable. Indeed, the major theme of the present paper can be summarized as finding methods for constructing the above representation and exploiting it for tractable probabilistic inference for a given probabilistic model.

Let $[[\cdot]]$ be the indicator function. The uniform distribution of each equivalence class $S_t$ is $U_t(\Xs) = [[T(\Xs)=t]]/|S_t|$; and the probability of $S_t$ is $\Pr(S_t) = \Pr(\xs)  |S_t|$ for every $\xs \in S_t$. 
Hence, every value of the statistic $T$ corresponds to one equivalence class $S_t$ of joint assignments with identical probability. We will refer to these equivalence classes as \emph{orbits}.
We write $\xs \sim \es$ when assignments $\xs$ and $\es$ agree on the values of their shared variables~\citep{darwiche2009modeling}.
The \emph{suborbit} $S_{t,\es} \subseteq S_t$ for some evidence state $\es$ is the set of those states in $S_t$ that are compatible with $\es$, that is, $ S_{t,\es} = \left\{\xs \mid T(\xs) = t \text{ and } \xs \sim \es \right\}$.

\subsection{Partial Exchangeability and Probabilistic Inference}
\label{s:tractablestatistics}

We are now in the position to relate finite partial exchangeability to tractable probabilistic inference, using notions from Theorem~\ref{theorem-param-pe}.
The inference tasks we consider are
\begin{itemize}
  \item[--] \emph{MPE inference}, i.e., finding  $\argmax_{\ys} \Pr(\ys,\es)$ for any given assignment $\es$ to variables $\Es \subseteq \Xs$, and
  \item[--] \emph{marginal inference}, i.e., computing $\Pr(\es)$ for any given $\es$.
\end{itemize}
For a set of variables $\Xs$, we say that $P(\xs)$ can be computed \emph{efficiently} iff it can be computed in time polynomial in $|\Xs|$.
We make the following complexity claims
\begin{theorem}
  \label{theorem-eff}
  Let $\Xs$ be partially exchangeable with statistic $T$.
  If we can efficiently
  \mynobreakpar
  \begin{itemize}
    \item[--] for all $\xs$, evaluate $\Pr(\xs)$, and
    \item[--] for all $\es$ and $t \in \mathcal{T}$ decide whether there exists an $\xs \in S_{t,\es}$, and if so, construct it,
  \end{itemize}
  then the complexity of \emph{MPE inference} is polynomial in~$|\mathcal{T}|$.
  If we can additionally compute $|S_{t,\es}|$ efficiently, then the complexity of \emph{marginal inference} is also polynomial in $|\mathcal{T}|$.
\end{theorem}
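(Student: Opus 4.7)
The plan is to read off both algorithms directly from the mixture parameterization in Theorem~\ref{theorem-param-pe}. The key observation is that the parameterization makes $\Pr(\xs)$ constant on every orbit $S_t$, and therefore also constant on every non-empty suborbit $S_{t,\es}$. So for a fixed evidence $\es$, the set of completions $\{(\ys,\es) \mid \ys \text{ assigns } \Xs\setminus\Es\}$ is partitioned into at most $|\mathcal{T}|$ suborbits, and within each suborbit the joint probability is one fixed number.

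For MPE, I would enumerate $t \in \mathcal{T}$ and, using the second efficiency assumption, decide whether $S_{t,\es}$ is non-empty and, if so, construct one witness $\xs_t \in S_{t,\es}$. The first efficiency assumption then lets us evaluate $\Pr(\xs_t)$, and by the constancy above this value equals $\Pr(\xs)$ for \emph{every} $\xs \in S_{t,\es}$. The MPE state is then any $\xs_{t^*}$ for $t^* \in \argmax_t \Pr(\xs_t)$, where the max is over $t$ with non-empty suborbit. Correctness is immediate from the partition argument, and the runtime is $|\mathcal{T}|$ calls to the two efficient subroutines, hence polynomial in $|\mathcal{T}|$.

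For marginal inference, I would expand
\begin{align*}
\Pr(\es) \;=\; \sum_{\ys} \Pr(\ys,\es) \;=\; \sum_{t \in \mathcal{T}} \sum_{\xs \in S_{t,\es}} \Pr(\xs) \;=\; \sum_{t \in \mathcal{T}} |S_{t,\es}|\,\Pr(\xs_t),
\end{align*}
where $\xs_t$ is any witness of $S_{t,\es}$ (and the term is $0$ when the suborbit is empty). Given the additional assumption that $|S_{t,\es}|$ can be computed efficiently, every summand is obtained in polynomial time, and summing over $\mathcal{T}$ gives overall complexity polynomial in $|\mathcal{T}|$.

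There is no real technical obstacle here; the proof is essentially bookkeeping on top of the parameterization theorem. The only point that deserves a careful sentence is the reduction of the max over $\ys$ to a max over $t$: one must note that the completions of $\es$ partition cleanly along the value of $T(\ys,\es)$, so that a single witness per non-empty suborbit suffices for MPE, and a single witness plus the suborbit size suffices for marginals. Everything else follows from the two assumed efficient primitives and finiteness of $\mathcal{T}$.
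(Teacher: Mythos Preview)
Your proposal is correct and follows essentially the same approach as the paper's proof: enumerate $t \in \mathcal{T}$, construct a witness $\xs_t \in S_{t,\es}$ for each non-empty suborbit, and return the maximizer $\argmax_t \Pr(\xs_t)$ for MPE or the sum $\sum_{t} |S_{t,\es}|\,\Pr(\xs_t)$ for marginals. You simply spell out in more detail the partition-into-suborbits justification that the paper leaves implicit.
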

  
\begin{proof}
  For MPE inference, we construct an $\xs_t \in S_{t,\es}$ for each $t \in \mathcal{T}$, and return the one maximizing $\Pr(\xs_t)$.
  For marginal inference, we return $\sum_{t \in \mathcal{T}} \Pr(\xs_t)|S_{t,\es}|$.
\end{proof}

If the above conditions for tractable inference are fulfilled we say that a distribution is \emph{tractably} partially exchangeable for MPE or marginal inference.
We will present notions of exchangeability and related statistics $T$ that make distributions tractably partially exchangeable.
Please note that Theorem~\ref{theorem-eff} generalizes to situations in which we can only efficiently compute $\Pr(\xs)$ up to a constant factor $Z$, as is often the case in undirected probabilistic graphical models.

\subsection{Markov Logic Case Study}

Exchangeability and independence are not mutually exclusive.
Independent and identically distributed (iid) random variables are also exchangeable.
Take for example the MLN
\mln{
  \begin{align*}
    1.5 \quad & \s(\xl)
  \end{align*}
}
The random variables $\s(\alice), \s(\bob), \dots$ are independent. Hence, we can compute their marginal probabilities independently as 
\begin{align*}
  \Pr(\s(\alice)) = \Pr(\s(\bob)) = \frac{\exp(1.5)}{\exp(1.5)+1}
\end{align*}
The variables are also finitely exchangeable. For example, the probability that $\alice$ smokes and $\bob$ does not is equal to the probability that $\bob$ smokes and $\alice$ does not.
The sufficient statistic $T(\xs)$ counts \emph{how many} people smoke in the state $\xs$ and the probability of a state in which $n$ out of $N$ people smoke is $\exp(1.5n)/(\exp(1.5)+1)^N$.

Exchangeability can occur without independence, as in the following Markov logic network
\mln{
  \begin{align*}
    1.5 \quad & \s(\xl) \land \s(\yl)
  \end{align*}
}
This distribution has neither independencies nor conditional independencies. 
However, its variables are finitely exchangeable and the probability of a state $\xs$ is only a function of the sufficient statistic $T(\xs)$ counting the number of smokers in $\xs$.
The probability of a state now increases by a factor of $\exp(1.5)$ with every \emph{pair of smokers}.
When $n$ people smoke there are $n^2$ pairs and, hence, $\Pr(\xs) = \exp\left(1.5 n^2\right)/Z$, where $Z$ is a normalization constant.
Let $\Ys$ consist of all $\s(\xl)$ variables except for $\s(\alice)$, and let $\ys$ be an assignment to $\Ys$ in which $m$ people smoke.
The probability 
that $\alice$ smokes given $\ys$ is
\begin{align*}
  \Pr\left(\s(\alice) \mid\ys\right) = \frac{\exp\left(1.5 (m+1)^2\right)}{\exp\left(1.5 (m+1)^2\right) + \exp\left(1.5 m^2\right)},
\end{align*}
which clearly depends on the number of smokers in $\ys$. Hence, $\s(\alice)$ is not independent of $\Ys$ but the random variables are exchangeable with sufficient statistic $n$. Figure~\ref{fig:symmetric} depicts the graphical representation of the corresponding ground Markov logic network. 

\section{Exchangeable Decompositions}

We now present novel instances of partial exchangeability that render probabilistic inference tractable. These instances generalize exchangeability of single variables to exchangeability of sets of variables. We describe the notion of an exchangeable decomposition and prove that it fulfills  the tractability requirements of Theorem~\ref{theorem-eff}.
We proceed by demonstrating that these forms commonly occur in MLNs.

\subsection{Variable Decompositions}

The notions of independent and exchangeable decompositions are at the core of the developed theoretical results. 

\begin{definition}[Variable Decomposition]
A \emph{variable decomposition} $\Xd = \{\Xs_1,\dots,\Xs_k\}$ partitions $\Xs$ into subsets $\Xs_i$.
We call $w = \max_{i} |\Xs_i|$ the \emph{width} of the decomposition.
\end{definition}

\begin{definition}[Independent Decomposition]
  \label{def:indep-decomp}
  A variable decomposition is \emph{independent} if and only if $\Pr$ factorizes~as
  \begin{equation*}
  \Pr\left(\Xs\right) = \prod_{i=1}^{k} Q_i(\Xs_i).
  \end{equation*}
\end{definition}

\guy{Say we assume that when there is independence, it is clear from whatever syntactic representation of $\Pr(\Xs)$ we have, what the $\Q_i(\Xs_i)$ are and how to evaluate them for full states $\xs_i$}

\begin{definition}[Exchangeable Decomposition]
  A variable decomposition is \emph{exchangeable} iff for all permutations $\pi$,
  \begin{align*}
    & \Pr\left(\Xs_1 = \xs_1,\dots,\Xs_k = \xs_k \right)  \\ 
    & \qquad\qquad = \Pr\left(\Xs_1 = \xs_{\pi(1)},\dots,\Xs_k = \xs_{\pi(k)}\right).
  \end{align*}
\end{definition}

\begin{figure}[t]
\begin{center}
\includegraphics[width=0.476\textwidth]{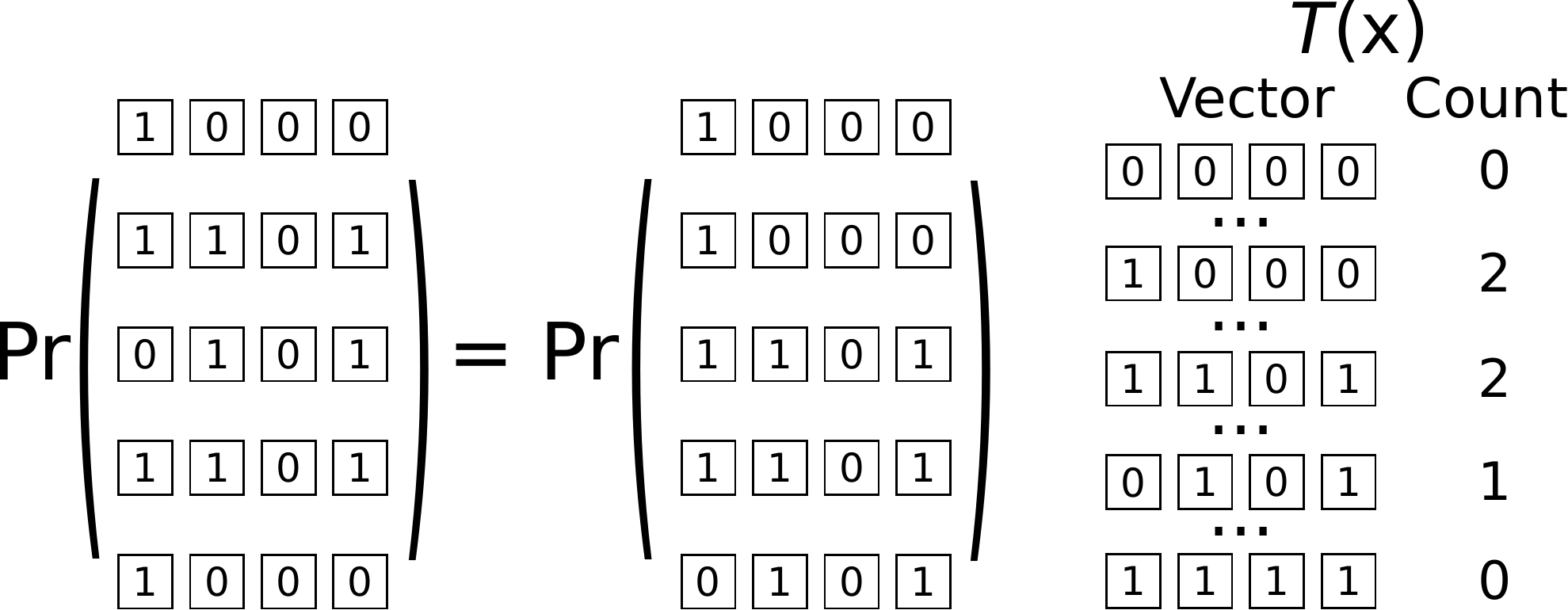}
\caption{\label{fig:generative} An exchangeable decomposition of $20$ binary random variables (the boxes) into $5$  components of width $4$ (the rows). The statistic $T$ counts the occurrences of each unique binary vector.}
\end{center}
\end{figure} 

Figure~\ref{fig:generative} depicts an example distribution $\Pr$ with $20$ random variables and a decomposition into $5$ subsets of width $4$. The joint distribution is invariant under permutations of the $5$ sequences. The corresponding sufficient statistic $T$ counts the number of occurrences of each binary vector of length $4$ and returns a tuple of counts.

\noindent Please note that the definition of full finite exchangeability~(Definition~\ref{full-exch}) is the special case when the exchangeable decomposition has width~$1$. Also note that the size of all subsets in an exchangeable decomposition equal the width.

\subsection{Tractable Variable Decompositions}

The core observation of the present work is that variable decompositions that are exchangeable and/or independent result in tractable probabilistic models. 
For independent decompositions, the following tractability guarantee is  used in most existing inference algorithms.
\begin{prop}
  \label{theorem-decomp-tractable-indep}
  Given an independent decomposition of $\Xs$ with bounded width, and a corresponding factorized representation of the distribution~(cf.~Definition~\ref{def:indep-decomp}), the complexity of MPE and marginal inference is polynomial in~$|\Xs|$.
\end{prop}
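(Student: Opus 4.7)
The plan is to exploit the fact that the independent decomposition splits the variables $\Xs$ into disjoint subsets $\Xs_1,\dots,\Xs_k$ whose joint distribution factorizes as $\Pr(\Xs) = \prod_{i=1}^k Q_i(\Xs_i)$, so both inference tasks reduce to solving one local problem per component and combining the answers. Because the width $w = \max_i |\Xs_i|$ is bounded (hence constant), each local problem has at most $2^w = O(1)$ assignments, and $k \le |\Xs|$, so linearity in $|\Xs|$ will drop out immediately once the per-component work is shown to be constant.

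First I would handle MPE. Given evidence $\es$, partition it across the components as $\es_i = \es \cap \Xs_i$. Since the $\Xs_i$ are disjoint and the factors $Q_i$ depend only on $\Xs_i$, we have
\begin{align*}
\argmax_{\ys} \Pr(\ys,\es) = \bigcup_{i=1}^{k} \argmax_{\ys_i} Q_i(\ys_i, \es_i),
\end{align*}
where $\ys_i$ ranges over assignments to $\Xs_i \setminus \Es$. For each $i$, enumerate the at most $2^w$ extensions of $\es_i$ to $\Xs_i$, evaluate $Q_i$ on each (constant time by assumption that the factorized representation permits evaluation of $Q_i$ on full component states), and record the maximizer. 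Concatenating the $k$ local maximizers yields a global MPE solution, at cost $O(k \cdot 2^w) = O(|\Xs|)$.

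Next I would handle marginal inference by the same decomposition, using distributivity:
\begin{align*}
\Pr(\es) = \sum_{\ys} \prod_{i=1}^{k} Q_i(\ys_i, \es_i) = \prod_{i=1}^{k} \sum_{\ys_i} Q_i(\ys_i, \es_i).
\end{align*}
Each inner sum has at most $2^w$ terms and so is computed in constant time, and the outer product involves $k \le |\Xs|$ factors, again giving $O(|\Xs|)$ total work.

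There is no real obstacle here: the result is essentially a restatement of the classical observation that inference decomposes over independent factors. The only subtle point is the reliance on the factorized representation being explicitly available, so that each $Q_i$ can be evaluated in $O(1)$ time on a full component assignment; this is precisely what Definition~\ref{def:indep-decomp} together with the stated hypothesis of a ``corresponding factorized representation'' provides, so no further machinery is needed.
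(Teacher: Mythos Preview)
The paper does not actually prove this proposition: it is stated as a well-known fact (``the following tractability guarantee is used in most existing inference algorithms'') and no argument is given. Your proof supplies exactly the standard justification the authors have in mind --- decompose the evidence across the independent components, solve each local MPE/marginal problem by brute force over at most $2^w$ states, and combine --- and it is correct. There is nothing to compare against; your write-up simply fills in what the paper leaves implicit.
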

While the decomposition into independent components is a well-understood concept, the combination with finite exchangeability has not been previously investigated as a statistical property that facilitates tractable probabilistic inference.
We can now prove the following result.
\begin{theorem}
  \label{theorem-decomp-tractable}
  Suppose we can compute $\Pr(\xs)$ in time polynomial in $|\Xs|$.
  Then, given an exchangeable decomposition of $\Xs$ with bounded width, the complexity of MPE and marginal inference is polynomial in~$|\Xs|$.
\end{theorem}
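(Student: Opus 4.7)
The plan is to reduce to Theorem~\ref{theorem-eff} via a natural sufficient statistic. Let $k = |\Xs|/w$ be the number of components, and define $T(\xs) \in \mathbb{N}^{2^w}$ so that, for each $\bm{v} \in \{0,1\}^w$, its $\bm{v}$-coordinate is the number of components $\Xs_i$ with $\xs|_{\Xs_i} = \bm{v}$. Two states satisfy $T(\xs) = T(\xs')$ iff the multiset of component-assignments agrees, i.e., some permutation of the components maps one to the other; exchangeability of the decomposition then gives $\Pr(\xs) = \Pr(\xs')$, so $\Xs$ is partially exchangeable with respect to $T$. Because the $2^w$ entries of $T$ are nonnegative integers summing to $k \le |\Xs|$, we have $|\mathcal{T}| \le \binom{k + 2^w - 1}{2^w - 1} = O(|\Xs|^{2^w})$, which is polynomial in $|\Xs|$ since $w$ is bounded.

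Next I would verify the two tractability conditions of Theorem~\ref{theorem-eff}. Evaluation of $\Pr(\xs)$ is polynomial by hypothesis. For the conditions involving evidence, I restrict $\es$ to each component to obtain $\es_i = \es|_{\Xs_i}$ and let $E_i \subseteq \{0,1\}^w$ be the set of $w$-vectors that agree with $\es_i$. Because bounded $w$ forces the components into only $L \le 3^w = O(1)$ distinct constraint patterns, I would group them into $L$ groups of sizes $s_1,\dots,s_L$. A state of $S_{t,\es}$ is then specified by a nonnegative integer matrix $(n_{\ell,\bm{v}})$ with $n_{\ell,\bm{v}} = 0$ for $\bm{v} \notin E_\ell$, row sums $s_\ell$, and column sums $t_{\bm{v}}$, together with an assignment of specific components in group $\ell$ to specific vectors. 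Feasibility reduces to a polynomial-time bipartite transportation problem, and an explicit integer solution yields a witness $\xs \in S_{t,\es}$. The count is then
\begin{equation*}
|S_{t,\es}| = \sum_{(n_{\ell,\bm{v}})} \prod_{\ell=1}^{L} \frac{s_\ell!}{\prod_{\bm{v} \in E_\ell} n_{\ell,\bm{v}}!},
\end{equation*}
summed over valid matrices; since there are only $L \cdot 2^w = O(1)$ entries each bounded by $k$, this sum has polynomially many terms and is evaluable in polynomial time. Applying Theorem~\ref{theorem-eff} gives the claim.

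The hard part is handling evidence in the count $|S_{t,\es}|$: without evidence it is a single multinomial $k!/\prod_{\bm{v}} t_{\bm{v}}!$, but evidence distinguishes components by their constraint sets $E_i$, which in general could make such a count \#P-hard. The saving observation is that bounded width admits only constantly many constraint patterns, so one can enumerate all compatible contingency tables and reduce the hard count to a polynomial-size sum of multinomials.
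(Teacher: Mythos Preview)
Your proposal is correct and follows essentially the same argument as the paper. In particular, the paper's appendix also groups components by their evidence pattern in $\{0,1,*\}^w$ (at most $3^w$ patterns), enumerates the set $\mathcal{A}_{t,\es}$ of $2^w \times 3^w$ nonnegative integer matrices with the prescribed row and column sums and zero entries where $b_i \not\sim m_j$, and sums the corresponding multinomial counts---your matrix $(n_{\ell,\bm{v}})$ is exactly this object with rows and columns transposed, and your multinomial $\prod_\ell s_\ell!/\prod_{\bm{v}} n_{\ell,\bm{v}}!$ is the paper's $|\gamma(A)|$ written in closed form rather than as a telescoping product of binomials.
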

\guy{Reviewer:
Generally, one shouldn't use mathematical symbols in theorem statements unless they are necessary. The symbol (X) for the decomposition is not referenced in the theorem and would be enough to state that there exists an exchangeable composition. (Same applies to theorems 7 and 8)}

\begin{proof}
Following Theorem~\ref{theorem-eff}, we have to show that there exists a statistic $T$ so that (a)~$|\mathcal{T}|$ is polynomial in $|\Xs|$; (b)~we can efficiently decide whether an $\xs \in S_{t,\es}$ exists and if so, construct it; and (c) efficiently compute $|S_{t,\es}|$ for all $\es$ and $t \in \mathcal{T}$. 
Statements (b) and (c) ensure that the assumptions of Theorem~\ref{theorem-eff} hold for exchangeable decompositions, which combined with (a) proves the theorem.

To prove (a), let us first construct a sufficient statistic for exchangeable decompositions.
A full joint assignment $\Xs = \xs$ decomposes into assignments $\Xs_1=\xs_1,\dots,\Xs_k =\xs_k$ in accordance with the given variable decomposition.
Each $\xs_\ell$ is a bit string $b \in \{0,1\}^w$.
Consider a statistic $T(\xs) = (c_1,\dots,c_{2^w})$, where each $c_i$ has a corresponding unique bit string $b_i \in \{0,1\}^w$ and $c_i = \sum_{\ell=1}^{k} [[\xs_\ell=b_i]]$.
The value $c_i$ of the statistic thus represents the number of components in the decomposition that are assigned bit string $b_i$.
Hence, we have $\sum_{i=1}^{2^w} c_i = k$, and we prove (a) by observing that
$$|\mathcal{T}| = \binom{k+2^{w}-1}{2^{w}-1} \leq k^{2^w - 1} \leq n^{2^w-1}.$$

To prove statements (b) and (c) we have to find, for each partial assignment $\Es = \es$, an algorithm that generates an $\xs \in S_{t,\es}$ and that computes $|S_{t,\es}|$ in time polynomial in $|\Xs|$. 
To hint at the proof strategy, we give the formula for the orbit without evidence $|S_t|$:
\begin{align*}
  \left|S_{(c_1,\dots,c_{2^w})}\right| = \binom{k}{c_1}\binom{k-c_1}{c_2}\dots\binom{k-\sum_{i=1}^{2^w-2}c_i}{c_{2^w-1}}. 
\end{align*}
The proof is very technical and deferred to the appendix.
\end{proof}

\subsection{Markov Logic Case Study}

Let us consider the following MLN
\mln{ \begin{align*}
  1.3 \quad & \s(\xl) \Rightarrow \c(\xl) \label{f:smokescancer}\\
  1.5 \quad & \s(\xl) \land \s(\yl)
\end{align*}}
It models a distribution in which every non-smoker or smoker with cancer, that is, every $\xl$ satisfying the first formula, increases the probability by a factor of $\exp(1.3)$. Each pair of smokers increases the probability by a factor of $\exp(1.5)$.
This model is not fully exchangeable: swapping $\s(\alice)$ and $\c(\alice)$ in a state yields a different probability. There are no (conditional) independencies between the $\s(\xl)$ atoms.

The variables in this MLN do have an exchangeable decomposition whose width is two, namely
\begin{align*}
  \Xd = \, 
    &\{\{\s(\alice),\c(\alice)\}, \\
    &\phantom{\{}\{\s(\bob),\c(\bob)\}, \\
    &\phantom{\{}\{\s(\charlie),\c(\charlie)\},\dots\}.
\end{align*}
The sufficient statistic of this decomposition counts the number of people in each of four groups, depending on whether they smoke, and whether they have cancer. 
The probability of a state only depends on the number of people in each group and swapping people between groups does not change the probability of a state. For example, 
\begin{align*}
  & \Pr(\s(\alice)=0,\c(\alice)=1, \\
  &   \qquad\qquad\qquad   \s(\bob)=1,\c(\bob)=0) \\
  & \quad = \Pr(\s(\alice)=1,\c(\alice)=0,\\
  & \qquad\qquad\qquad \s(\bob)=0,\c(\bob)=1).
\end{align*}
Theorem~\ref{theorem-decomp-tractable} says that this MLN permits tractable inference.

The fact that this MLN has an exchangeable decomposition is not a coincidence. In general, we can show this for MLNs of unary atoms, which are called \emph{monadic} MLNs.

\begin{theorem} \label{theorem:monadic-decomposition}
    The variables in a monadic MLN have an exchangeable decomposition. The width of this decomposition is equal to the number of predicates.
\end{theorem}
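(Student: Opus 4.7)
The plan is to exhibit the decomposition explicitly and then argue exchangeability via the symmetry of the ground MLN under permutations of the domain constants.

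First, I would define the decomposition. Let $P_1, \dots, P_m$ be the unary predicates appearing in the monadic MLN $\Delta$, and for each constant $c \in \dom$ set
\begin{equation*}
  \Xs_c = \{P_1(c), P_2(c), \dots, P_m(c)\}.
\end{equation*}
Then $\Xd = \{\Xs_c \mid c \in \dom\}$ is a partition of the ground atoms into $|\dom|$ blocks, each of size exactly $m$, so the width equals $m$, the number of predicates. This immediately gives the width claim.

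Next, I would verify exchangeability. Fix a permutation $\pi$ of $\dom$ and extend it to worlds by letting $\pi \cdot \omega$ be the world in which $P_j(\pi(c))$ takes the value that $P_j(c)$ takes in $\omega$. This is exactly the action that permutes the blocks $\Xs_c$ of the decomposition according to $\pi$, so what we need to show is $\Pr(\omega) = \Pr(\pi \cdot \omega)$. Since the MLN contains no constants (by the assumption from Section~\ref{s:mlns}), every first-order formula $f \in \Delta$ is built from unary atoms with logical variables only, and $\pi$ induces a bijection on the set of groundings of $f$: instantiating a tuple of logical variables with $(c_1,\dots,c_r)$ corresponds, under $\pi$, to instantiating it with $(\pi(c_1),\dots,\pi(c_r))$, and both resulting ground formulas carry the same weight $w$. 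Moreover, by the definition of $\pi \cdot \omega$, a ground formula $g$ is satisfied in $\omega$ if and only if its image $\pi \cdot g$ (obtained by replacing every constant $c$ by $\pi(c)$) is satisfied in $\pi \cdot \omega$.

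Combining these two facts, the multiset of $(w,g)$ pairs satisfied by $\omega$ is sent by $\pi$ to the multiset of $(w,g)$ pairs satisfied by $\pi \cdot \omega$, so the product of exponentiated weights defining the unnormalized probability is identical for the two worlds. Hence $\Pr(\omega) = \Pr(\pi \cdot \omega)$, which is exactly the statement that the decomposition $\Xd$ is exchangeable.

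The only real subtlety I anticipate is bookkeeping around formulas with several logical variables (e.g.\ $\s(\xl) \land \s(\yl)$): atoms are unary but a single formula may mention multiple constants after grounding. The argument above handles this uniformly because $\pi$ acts diagonally on tuples of constants and the weight of a ground formula depends only on its parent first-order formula, not on which constants were substituted. Once that is made precise, the proof is essentially a symmetry argument and no counting is needed.
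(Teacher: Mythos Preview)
Your proposal is correct and takes essentially the same approach as the paper: the same block decomposition $\Xs_c = \{P_1(c),\dots,P_m(c)\}$ and the same symmetry argument via domain permutations (what the paper calls \emph{renaming automorphisms}). The paper's proof is terser---it cites renaming automorphisms and argues with transpositions $i \leftrightarrow i'$ rather than arbitrary $\pi$---whereas you spell out explicitly why such a permutation preserves the multiset of satisfied weighted ground formulas; but the underlying idea is identical.
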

\noindent The proof builds on syntactic symmetries of the MLN, called renaming automorphisms~\citep{hai2012automorphism,niepert2012lmcmc}. Please see the appendix for further details.

It now follows as a corollary of Theorems~\ref{theorem-decomp-tractable} and~\ref{theorem:monadic-decomposition} that MPE and marginal inference in monadic MLNs is polynomial in $|\Xs|$, and therefore also in the domain size $|\dom|$.
\begin{corollary}
  \label{cor-unary-mln}
  Inference in monadic MLNs is domain-lifted.
\end{corollary}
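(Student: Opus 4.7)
The plan is to chain Theorem~\ref{theorem:monadic-decomposition} and Theorem~\ref{theorem-decomp-tractable}, and then translate the $|\Xs|$-polynomial bound of the latter into a $|\dom|$-polynomial bound to match the definition of domain-lifted inference. First I would invoke Theorem~\ref{theorem:monadic-decomposition} to produce an exchangeable decomposition $\Xd$ of the random variables of the ground monadic MLN, with width $w$ equal to the number of predicates. Since the first-order MLN (and hence its predicate signature) is fixed independently of the domain, $w$ is a constant with respect to $|\dom|$, so the width is bounded in the sense required by Theorem~\ref{theorem-decomp-tractable}.

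Next I would verify that the remaining hypothesis of Theorem~\ref{theorem-decomp-tractable} holds, namely that $\Pr(\xs)$ can be evaluated in time polynomial in $|\Xs|$ (up to the normalization constant $Z$, which by the remark following Theorem~\ref{theorem-eff} is sufficient). This is immediate for any MLN: the unnormalized weight of a world is $\prod \exp(w_i)$ over satisfied ground formulas, and the number of groundings of each formula, as well as checking their satisfaction under $\xs$, is polynomial in $|\dom|$ and hence in $|\Xs|$. Applying Theorem~\ref{theorem-decomp-tractable} then yields that MPE and marginal inference run in time polynomial in~$|\Xs|$.

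Finally I would convert this to a bound in $|\dom|$: in a monadic MLN the atoms are precisely all groundings of the unary predicates, so $|\Xs|$ equals the number of predicates times $|\dom|$, which is linear in $|\dom|$ for a fixed signature. Polynomial in $|\Xs|$ therefore implies polynomial in $|\dom|$, establishing domain-liftedness. I do not expect any real obstacle here since the work is done by the two cited theorems; the only subtlety worth being explicit about is that ``bounded width'' in Theorem~\ref{theorem-decomp-tractable} must be read relative to the parameter $|\dom|$ driving domain-lifting, which is exactly what the fixed predicate count in Theorem~\ref{theorem:monadic-decomposition} guarantees.
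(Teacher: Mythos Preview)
Your proposal is correct and matches the paper's own argument: the corollary is stated immediately after Theorems~\ref{theorem-decomp-tractable} and~\ref{theorem:monadic-decomposition} and is justified there simply by saying that those two theorems give inference polynomial in~$|\Xs|$, and hence in~$|\dom|$. You have merely made explicit the two points the paper leaves implicit---that the unnormalized $\Pr(\xs)$ is efficiently evaluable and that $|\Xs|$ is linear in $|\dom|$ for a fixed predicate signature---both of which are straightforward.
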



\section{Marginal and Conditional Exchangeability}

Many distributions are not decomposable into independent or exchangeable decompositions. 
Similar to conditional independence, the notion of exchangeability can be extended to conditional exchangeability. 
We generalize exchangeability to conditional distributions, and state the corresponding tractability guarantees.

\subsection{Marginal and Conditional Decomposition}

Tractability results for exchangeable decompositions on all variables under consideration also extend to subsets.

\begin{definition}[Marginal Exchangeability]
When a subset $\Ys$ of the variables under consideration has an exchangeable decomposition $\Yd$, we say that $\Ys$ is \emph{marginally exchangeable}.
\end{definition}

\noindent This means that $\Yd$ is still an exchangeable decomposition when considering the distribution $\Pr(\Ys) = \sum_{\Xs \setminus \Ys} \Pr(\Xs)$.

\begin{theorem} \label{theorem:margdecomp-tractable}
Suppose we are given a marginally exchangeable decomposition of $\Ys$ with bounded width and let $\Zs = \Xs \setminus \Ys$.
If computing $\Pr(\ys) = \sum_\zs \Pr(\ys,\zs)$ is polynomial in $|\Xs|$ for all $\ys$, then the complexity of MPE and marginal inference over variables $\Ys$ is polynomial in $|\Xs|$.
\end{theorem}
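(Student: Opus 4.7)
The plan is to reduce the statement directly to Theorem~\ref{theorem-decomp-tractable} applied to the marginal distribution $\Pr(\Ys) = \sum_{\zs} \Pr(\Ys,\zs)$, viewing it as a stand-alone distribution over $\Ys$. The inference tasks in the theorem (MPE and marginal inference over $\Ys$ with evidence $\es$ on some $\Es \subseteq \Ys$) depend only on this marginal distribution, so it suffices to verify that the two hypotheses of Theorem~\ref{theorem-decomp-tractable} transfer to $\Pr(\Ys)$ with the decomposition $\Yd$.

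First, I would observe that, by definition, marginal exchangeability of $\Ys$ with decomposition $\Yd$ is precisely the assertion that $\Yd$ is an exchangeable decomposition of the marginal distribution $\Pr(\Ys)$. Since $\Yd$ has bounded width by hypothesis, the structural premise of Theorem~\ref{theorem-decomp-tractable} is met when that theorem is instantiated on $\Pr(\Ys)$ rather than on $\Pr(\Xs)$.

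Second, I would verify the pointwise-evaluation premise. By hypothesis, $\Pr(\ys) = \sum_{\zs} \Pr(\ys,\zs)$ can be computed in time polynomial in $|\Xs|$ for every full assignment $\ys$ to $\Ys$. Since $|\Ys| \le |\Xs|$, this is in particular polynomial in $|\Ys|$, so the evaluation hypothesis of Theorem~\ref{theorem-decomp-tractable} holds for the marginal distribution. Applying Theorem~\ref{theorem-decomp-tractable} to $\Pr(\Ys)$ with $\Yd$ then gives that both MPE inference $\argmax_\ys \Pr(\ys,\es)$ and marginal inference $\Pr(\es)$ over $\Ys$ are computable in time polynomial in $|\Ys|$, and a fortiori polynomial in $|\Xs|$, as required.

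I do not expect any genuine obstacle: once the marginal distribution is taken as the object of study, marginal exchangeability is exactly the right notion to reuse the earlier machinery unchanged, and the hypothesis on summability plays the role of the pointwise-evaluation clause. The only care needed is to note that the allowed evidence sits inside $\Ys$, so it is handled uniformly by Theorem~\ref{theorem-decomp-tractable} without ever requiring us to reason about $\Zs$-assignments separately; the $\Zs$-variables have already been integrated out by the summation.
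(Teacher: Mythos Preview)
Your reduction to Theorem~\ref{theorem-decomp-tractable} on the marginal $\Pr(\Ys)$ is the same strategy the paper uses. One inference, however, is backward: from ``$\Pr(\ys)$ is computable in time polynomial in $|\Xs|$'' and $|\Ys|\le|\Xs|$ you may \emph{not} conclude that $\Pr(\ys)$ is computable in time polynomial in $|\Ys|$ (polynomial bounds transfer from smaller to larger parameter, not the reverse). So Theorem~\ref{theorem-decomp-tractable} cannot quite be invoked as a black box, because its hypothesis asks for evaluation time polynomial in the size of the variable set being decomposed, here $|\Ys|$.

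The easy fix, and what the paper does, is to open the proof of Theorem~\ref{theorem-decomp-tractable} rather than apply it wholesale: the orbit machinery (enumerating $\mathcal{T}$, constructing a representative $\ys_{t,\es}\in S_{t,\es}$, and computing $|S_{t,\es}|$) runs in time polynomial in $|\Ys|$ and hence in $|\Xs|$; each evaluation $\Pr(\ys_{t,\es})=\sum_{\zs}\Pr(\ys_{t,\es},\zs)$ is polynomial in $|\Xs|$ by hypothesis; summing (for marginals) or maximizing (for MPE) over $t\in\mathcal{T}$ then gives the desired $|\Xs|$-polynomial bound.
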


\begin{proof}
Let $T$ be the statistic associated with the given decomposition, and let $\es$ be evidence given on $\Es \subseteq \Ys$. Then, 
\[P(\es) = \sum_{t \in \mathcal{T}} |S_{t,\es}| \cdot \sum_{\zs}\Pr(\ys_{t,\es},\zs), \text{~~where~~} \ys_{t,\es} \in S_{t,\es}.\]
By the assumption that $\Ys$ is marginally exchangeable and the proof of Theorem~\ref{theorem-decomp-tractable}, we can compute $|S_{t,\es}|$ and an $\ys_{t,\es} \in S_{t,\es}$ in time polynomial in~$|\Xs|$. An analogous argument holds for MPE inference on~$\Pr(\Ys)$.
\end{proof}

We now need to identify distributions $\Pr(\Ys,\Zs)$ for which we can compute $\Pr(\ys)$ efficiently. This implies tractable probabilistic inference over $\Ys$.
Given a particular $\ys$, we have already seen sufficient conditions: when $\Zs$ decomposes exchangeably or independently conditioned on $\ys$, Proposition~\ref{theorem-decomp-tractable-indep} and Theorem~\ref{theorem-decomp-tractable} guarantee that computing $\Pr(\ys)$ is tractable.
This suggests the following general notion.

\begin{definition}[Conditional Decomposability] 
Let $\Xs$ be a set of variables with $\Ys \subseteq \Xs$ and $\Zs = \Xs \setminus \Ys$.
We say that $\Zs$ is \emph{exchangeably (independently) decomposable given $\Ys$} 
if and only if for each assignment $\ys$ to $\Ys$, there exists an exchangeable (independent) decomposition $\Zd_\ys$ of~$\Zs$.
\end{definition}

\noindent Furthermore, we say that $\Zs$ is decomposable with \emph{bounded width} iff the width $w$ of each $\Zd_\ys$ is bounded.
When the decomposition can be computed in time polynomial in $|\Xs|$ for all $\ys$, we say that $\Zs$ is \emph{efficiently decomposable}.

\begin{theorem} \label{theorem:conditional-tractable}
Let $\Xs$ be a set of variables with $\Ys \subseteq \Xs$ and $\Zs = \Xs \setminus \Ys$.
Suppose we are given a marginally exchangeable decomposition of $\Ys$ with bounded width. 
Suppose further that $\Zs$ is efficiently (exchangeably or independently) decomposable given $\Ys$ with bounded width. If we can compute $\Pr(\xs)$ efficiently, then the complexity of MPE and marginal inference over variables $\Ys$ is polynomial in $|\Xs|$.
\end{theorem}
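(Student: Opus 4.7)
The plan is to chain the two previously-established decomposition theorems. By Theorem~\ref{theorem:margdecomp-tractable}, inference over~$\Ys$ is polynomial in~$|\Xs|$ as soon as the marginal $\Pr(\ys)=\sum_\zs\Pr(\ys,\zs)$ can be evaluated in polynomial time for every assignment~$\ys$ that arises in the argument (specifically, for the polynomially many orbit representatives $\ys_{t,\es}$ used in the proof of that theorem). So the entire burden of the proof is to use the conditional decomposability of~$\Zs$ to compute $\Pr(\ys)$ for a fixed~$\ys$.

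First I would fix an arbitrary $\ys$ and view $w_\ys(\zs) := \Pr(\ys,\zs)$ as an unnormalized distribution over~$\Zs$ whose ``partition function'' is precisely~$\Pr(\ys)$. By hypothesis, the decomposition~$\Zd_\ys$ of~$\Zs$ can be computed in time polynomial in $|\Xs|$, has bounded width~$w$, and is either independent or exchangeable for the conditional~$\Pr(\Zs\mid\ys)$ and therefore, up to the rescaling constant~$\Pr(\ys)$, also for $w_\ys$. Since $\Pr(\xs)$ is efficiently computable, so is~$w_\ys(\zs)$.

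Then I would apply the unnormalized version of Proposition~\ref{theorem-decomp-tractable-indep} (independent case) or Theorem~\ref{theorem-decomp-tractable} (exchangeable case), whose validity for weights known only up to a constant multiplicative factor is noted explicitly after Theorem~\ref{theorem-eff}. Concretely, in the exchangeable case I would build the tuple-of-counts statistic~$T$ from the proof of Theorem~\ref{theorem-decomp-tractable}, enumerate the at-most $|\Zs|^{2^w-1}$ orbits, pick a representative $\zs_t$ for each, and compute
\[
\Pr(\ys)\;=\;\sum_{t\in\mathcal{T}} w_\ys(\zs_t)\,|S_t|,
\]
using the closed-form product of binomials for $|S_t|$. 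The independent case reduces $\sum_\zs w_\ys(\zs)$ to a product of $k$ local sums, each over at most $2^w$ assignments. Feeding this efficient evaluation of~$\Pr(\ys)$ into Theorem~\ref{theorem:margdecomp-tractable} delivers the claimed polynomial bound for both MPE and marginal inference over~$\Ys$.

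The main delicate point is this passage from normalized to unnormalized weights: the decomposition hypotheses are phrased in terms of probability distributions, yet we need them to apply to the weight function~$w_\ys$. The observation to verify is that both independent factorization and full permutation-invariance of the joint are properties that survive multiplication by the positive constant~$\Pr(\ys)$, so $\Zd_\ys$ is automatically the same kind of decomposition for~$w_\ys$ as it is for~$\Pr(\Zs\mid\ys)$. Beyond that, the proof is a direct composition of results already in hand and requires no new combinatorial bookkeeping.
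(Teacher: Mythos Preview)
Your proposal is correct and follows essentially the same route as the paper: reduce to Theorem~\ref{theorem:margdecomp-tractable}, then for each fixed~$\ys$ invoke the (unnormalized) machinery of Theorem~\ref{theorem-decomp-tractable} or Proposition~\ref{theorem-decomp-tractable-indep} on~$\Zd_\ys$ to evaluate $\Pr(\ys)=\sum_\zs\Pr(\ys,\zs)$. The paper's own proof is terser---it simply says ``employing the arguments made in the proof of Theorem~\ref{theorem-decomp-tractable}'' and calls the independent case ``analogous''---whereas you spell out the unnormalized-weights passage and the orbit enumeration explicitly, but the underlying argument is identical.
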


\begin{proof}
Following Theorem~\ref{theorem:margdecomp-tractable}, we only need to show that we can compute $\Pr(\ys) = \sum_\zs \Pr(\ys,\zs)$ in time polynomial in $|\Xs|$ for all $\ys$. When $\Zs$ is exchangeably decomposable given $\Ys$, this follows from constructing $\Zd_\ys$ and employing the arguments made in the proof of Theorem~\ref{theorem-decomp-tractable}.
The case when $\Zs$ is independently decomposable is analogous.
\end{proof}

Theorems~\ref{theorem:margdecomp-tractable} and~\ref{theorem:conditional-tractable} are powerful results and allow us to identify numerous probabilistic models for which inference is tractable. For instance, we will prove liftability results for Markov logic networks. 
However, we are only at the beginning of leveraging these tractability results to their fullest extent. Especially Theorem~\ref{theorem:margdecomp-tractable} is widely applicable because the computation of $\sum_\zs \Pr(\ys,\zs)$ can be tractable for many reasons. For instance, conditioned on the variables $\Ys$, the distributions $\Pr(\ys,\Zs)$ could be bounded treewidth graphical models, such as tree-structured Markov networks. Tractability for $\Pr(\Ys)$ follows immediately from Theorem~\ref{theorem:margdecomp-tractable}.

Since Theorem~\ref{theorem:margdecomp-tractable} only speaks to the tractability of querying variables in $\Ys$, there is the question of when we can also efficiently query the variables $\Zs$.
Results from the lifted inference literature may provide a solution by bounding or approximating queries and evidence that includes $\Zs$ to maintain marginal exchangeability~\citep{VdBDarwiche13}. The next section shows that certain restricted situations permit tractable inference on the variables in $\Zs$.

\subsection{Markov Logic Case Study}
Let us again consider the MLN
\mln{ \begin{align*}
  1.3 \quad & \s(\xl) \Rightarrow \c(\xl)\\
  1.5 \quad & \s(\xl) \land \f(\xl,\yl) \Rightarrow \s(\yl) 
\end{align*}}
having the marginally exchangeable decomposition 
\[\Yd = \{\{\s(\alice)\},\{\s(\bob)\},\{\s(\charlie)\},\dots\}\]
whose width is one.
To intuitively see why this decomposition is marginally exchangeable, let us consider two states $\ys$ and $\ys'$ of the $\s(\xl)$ atoms in which \emph{only} the values of two atoms, for example $\s(\alice)$ and $\s(\bob)$, are swapped.
There is a symmetry of the MLNs joint distribution that  swaps these atoms: the renaming automorphism~\citep{hai2012automorphism,niepert2012lmcmc} that swaps constants $\alice$ and $\bob$ in all atoms.
For marginal exchangeability, we need that $\sum_\zs \Pr(\ys,\zs)=\sum_\zs \Pr(\ys',\zs)$. But this holds since the renaming automorphism is an automorphism of the set of states $\{\ys\zs \mid \zs \in \mathcal{D}_\Zs\}$ -- for every $\ys$, $\ys'$, and $\zs$ there exists an automorphism that maps $\ys\zs$ to $\ys'\zs'$ with $\Pr(\ys,\zs)= \Pr(\ys',\zs')$.

The given MLN has several marginally exchangeable decompositions, with the most general one being 
\begin{align*}
  \Yd = \, 
    &\{\{\s(\alice),\c(\alice),\f(\alice,\alice)\}, \\
    &\phantom{\{}\{\s(\bob),\c(\bob),\f(\bob,\bob)\}, \\
    &\phantom{\{}\{\s(\charlie),\c(\charlie),\f(\charlie,\charlie)\},\dots\}.
\end{align*}
For that decomposition, the remaining $\Zs$ variables $$ \{\f(\alice,\bob),\f(\bob,\alice),\f(\alice,\charlie),\dots\}$$ are independently decomposable given $\Ys$. The $\Zs$ variables appear at most once in any formula. In a probabilistic graphical model representation, evidence on the $\Ys$ variables would therefore decompose the graph into independent components.
Thus, it follows from Theorem~\ref{theorem:conditional-tractable} that we can efficiently answer any query over the variables in $\Ys$.

This insight generalizes to a large class of MLNs, called the \emph{two-variable fragment}. It consists of all MLNs whose formulas contain at most two logical variables.
\begin{theorem} \label{theorem:twologvar}
In a two-variable fragment MLN, let $\Ys$ and $\Zs$ be the ground atoms with one and two distinct arguments respectively.
Then there exists a marginally exchangeable decomposition of $\Ys$, and $\Zs$ is efficiently independently decomposable given $\Ys$.
Each decomposition's width is at most twice the number of predicates.
\end{theorem}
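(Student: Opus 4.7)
The plan is to exhibit explicit decompositions built from constants and pairs of constants, and then verify their properties via the renaming automorphism argument used earlier in the Markov logic case study.

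For the decomposition of $\Ys$, I would define, for each constant $a \in \dom$, the block $\Ys_a = \{\P(a) : \P \text{ unary}\} \cup \{\B(a,a) : \B \text{ binary}\}$ and set $\Yd = \{\Ys_a : a \in \dom\}$. Since predicates in the two-variable fragment have arity at most two, each block has size at most the number of predicates, so in particular at most twice that many. To prove marginal exchangeability, I would reuse the renaming automorphism machinery: any permutation $\pi$ of the constants in $\dom$ induces an automorphism of the ground MLN that maps $\Ys_a$ bijectively onto $\Ys_{\pi(a)}$ and permutes the $\Zs$ atoms among themselves, while preserving the weight of every world. Thus for any assignments $\ys_a$ we get $\Pr(\Ys_1{=}\ys_1,\dots,\Ys_k{=}\ys_k,\Zs{=}\zs) = \Pr(\Ys_1{=}\ys_{\pi(1)},\dots,\Ys_k{=}\ys_{\pi(k)},\Zs{=}\pi(\zs))$; summing over $\zs$ (equivalently, re-indexing the sum by $\pi$) gives marginal exchangeability of $\Yd$.

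For the conditional decomposition of $\Zs$, I would take, for every unordered pair $\{a,b\}$ with $a \neq b$, the block $\Zs_{\{a,b\}} = \{\B(a,b),\B(b,a) : \B \text{ binary}\}$, which has width at most twice the number of binary predicates. Note this decomposition does not depend on $\ys$ at all, which immediately gives efficient decomposability. Independence given $\Ys=\ys$ follows from the fact that, because every formula uses at most two logical variables, every ground formula mentions the atoms of at most one or two constants: a grounding $\xl\mapsto a, \yl\mapsto b$ with $a=b$ touches only $\Ys_a$, while one with $a\neq b$ touches only $\Ys_a \cup \Ys_b \cup \Zs_{\{a,b\}}$. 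Thus the unnormalized weight factorises as
\[
w(\ys,\zs) \;=\; w_{0}(\ys)\,\prod_{\{a,b\},\,a\neq b} w_{\{a,b\}}(\ys_a,\ys_b,\zs_{\{a,b\}}),
\]
and dividing by $\Pr(\ys)$ gives a product of factors $Q_{\{a,b\}}(\Zs_{\{a,b\}})$, which is exactly the form required by Definition~\ref{def:indep-decomp}.

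The main obstacle is the marginal-exchangeability step: it requires making precise that a permutation of constants lifts to a measure-preserving map on worlds, and that summing over $\Zs$ commutes with this permutation because it permutes $\Zs$-atoms among themselves. Once this is in place, the independence step is essentially a syntactic bookkeeping argument, and the width bounds follow by counting how many atoms each block can contain under the assumption that predicate arities are at most two.
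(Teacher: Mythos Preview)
Your proposal is correct and follows essentially the same approach as the paper: the paper defines exactly the same blocks $\Ys_i = \{P_\ell(i)\}\cup\{Q_\ell(i,i)\}$ and $\Zs_{i,j} = \{Q_\ell(i,j),Q_\ell(j,i)\}$, invokes the renaming-automorphism argument for marginal exchangeability, and observes that every ground formula's $\Zs$-atoms lie in a single $\Zs_{i,j}$ to get the factorization. If anything, you are slightly more explicit than the paper about why marginalizing over $\Zs$ preserves exchangeability (the paper simply refers back to the monadic case).
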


The proof of Theorem~\ref{theorem:twologvar} is rather technical and we refer the reader to the appendix for a detailed proof.
It now follows from Theorems~\ref{theorem:conditional-tractable} and~\ref{theorem:twologvar} that the complexity of inference over the unary atoms in the two-variable fragment is polynomial in the domain size $|\dom|$.

What happens if our query involves variables from $\Zs$ -- the binary atoms? It is known in the lifted inference literature that we cannot expect efficient inference of general queries that involve the binary atoms. Assignments to the $\Zs$ variables break symmetries and therefore break marginal exchangeability. This causes inference to become \#P-hard as a function of the query~\citep{VdBAAAI12}.
Nevertheless, if we bound the number of binary atoms involved in the query, we can use the developed theory to show a general liftability result.

\begin{theorem} \label{theorem:twologvar-lifted}
For any MLN in the two-variable fragment, MPE and marginal inference over the unary atoms and a bounded number of binary atoms is domain-lifted.
\end{theorem}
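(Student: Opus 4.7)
My plan is to reduce the theorem to Theorems~\ref{theorem:conditional-tractable} and~\ref{theorem:twologvar} by enumerating over a constant number of ``anchor'' assignments that absorb the bounded binary portion of the query. Let $\es$ be evidence on unary atoms together with a bounded subset $\es_Z \subseteq \Zs$ of binary atoms, let $\dom_Q \subseteq \dom$ be the bounded set of constants appearing in $\es_Z$, and let $\Es_{\text{int}}$ denote all ground atoms of the MLN whose arguments all lie in $\dom_Q$. Since $|\dom_Q|$, the maximum arity ($\leq 2$), and the number of predicates are all bounded, $|\Es_{\text{int}}|$ is a constant, so at most $O(1)$ assignments $\as$ to $\Es_{\text{int}}$ are consistent with $\es$ (note $\es_Z\subseteq \Es_{\text{int}}$ by construction). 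I would first enumerate these $\as$ and handle each separately.

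For a fixed $\as$, the central step is to view the conditional distribution $\Pr_{\Delta}(\cdot \mid \as)$ as a new MLN $\Delta'_{\as}$ over the reduced domain $\dom' = \dom \setminus \dom_Q$. Each original formula $F(\xl,\yl)$ contributes three kinds of groundings: those with both variables in $\dom_Q$ (entirely evaluated by $\as$, yielding a constant multiplicative weight); those with exactly one variable in $\dom_Q$ (which, for each $c \in \dom_Q$, become unary formulas in the free variable over an extended signature in which each binary predicate $R$ paired with $c$ contributes fresh unary predicates representing $R(c,\cdot)$ and $R(\cdot,c)$); and those with both variables in $\dom'$ (remaining binary formulas with the original predicates). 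Because $|\dom_Q|$ and the original signature are bounded, $\Delta'_{\as}$ is itself a two-variable fragment MLN with a bounded number of predicates and formulas, and its ground atoms split naturally into unary atoms $\Ys'$ (the original unary atoms over $\dom'$ together with the new boundary atoms) and binary atoms $\Zs'$ (the original binary atoms with both arguments in $\dom'$).

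Theorem~\ref{theorem:twologvar} then applies to $\Delta'_{\as}$, giving a marginally exchangeable decomposition of $\Ys'$ with width at most twice the (bounded) number of predicates of $\Delta'_{\as}$, together with an efficient independent decomposition of $\Zs'$ given $\Ys'$; evaluating $\Pr_{\Delta'_{\as}}(\xs)$ remains polynomial-time. Theorem~\ref{theorem:conditional-tractable} would then yield MPE and marginal inference over $\Ys'$ in time polynomial in $|\dom'| = O(|\dom|)$. The residual evidence $\es \setminus \Es_{\text{int}}$ consists of unary atoms over $\dom'$ and hence lies entirely in $\Ys'$, so this inference computes $\Pr_{\Delta}(\as, \es \setminus \Es_{\text{int}})$ up to the constant weight factor contributed by the fully-internal formula groundings under $\as$, and analogously for the conditional MPE. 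Summing (respectively, maximising) these values over the $O(1)$ consistent anchor assignments $\as$ produces $\Pr(\es)$ (respectively, an MPE state), still in time polynomial in $|\dom|$.

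The main obstacle I anticipate is a careful justification of the second paragraph: once constants in $\dom_Q$ are anchored they cease to be exchangeable with the rest of $\dom$, so their roles must be absorbed into fresh boundary unary predicates rather than remaining as domain elements, and one must bookkeep the weights contributed by mixed groundings so they emerge as legitimate unary formulas over the extended signature with the correct weights. Once this syntactic reduction to a bounded-signature two-variable MLN over $\dom'$ is formalised, the rest of the argument is a direct composition of Theorems~\ref{theorem:conditional-tractable} and~\ref{theorem:twologvar}, with the outer $O(1)$ enumeration doing the work needed to handle the bounded binary part of the query.
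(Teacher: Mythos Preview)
Your proposal is correct and follows essentially the same route as the paper: isolate the bounded set of constants $K$ (your $\dom_Q$) appearing in the binary part of the query, enumerate all $O(1)$ assignments to the atoms internal to $K$, and then observe that the remaining model behaves like a two-variable MLN in which the ``boundary'' binary atoms with exactly one argument in $K$ play the role of unary atoms, so Theorems~\ref{theorem:twologvar} and~\ref{theorem:conditional-tractable} apply. The only stylistic difference is that you package the reduction as an explicit new MLN $\Delta'_{\as}$ over an extended bounded signature and reduced domain $\dom'$, whereas the paper argues directly that the required marginal-exchangeability and independent-decomposability properties persist after conditioning on~$\qs$; your formulation is slightly more detailed but neither adds nor loses anything substantive.
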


\noindent This corresponds to one of the strongest known theoretical results in the lifted inference literature~\citep{JaegerStarAI12}.
We refer the interested reader to the appendix for the proof.
A consequence of Theorem~\ref{theorem:twologvar-lifted} is that we can efficiently compute \emph{all single marginals} in the two-variable fragment, given arbitrary evidence on the unary atoms.

\section{Discussion and Conclusion}

We conjecture that the concept of (partial) exchangeability has potential to contribute to a deeper understanding of tractable probabilistic models. The important role conditional independence plays in the research field of graphical models is evidence for this hypothesis. Similar to conditional independence, it might be possible to  develop a theory of exchangeability that mirrors that of independence. For instance, there might be a (graphical) structural representations of particular types of partial exchangeability and corresponding logical axiomatizations~\citep{Pearl:1988}. Moreover, it would be interesting to develop graphical models with exchangeability and independence, and notions like d-separation to detect marginal exchangeability and conditional decomposability from a structural representation. The first author has taken steps in this direction by introducing exchangeable variable models, a  class of (non-relational) probabilistic models based on finite partial exchangeability~\cite{niepert:2014b}.

Recently, there has been considerable interest in computing and exploiting the automorphisms of graphical models~\citep{niepertorbits,hai2012automorphism}. There are several  interesting connections between automorphisms, exchangeability, and lifted inference~\citep{niepert2012lmcmc}. Moreover, there are several group theoretical algorithms that one could apply to the automorphism  groups to discover the structure of exchangeable variable decompositions from the structure of the graphical models. Since we presently only exploit renaming automorphisms, there is a potential for tractable inference in MLNs that goes beyond what is known in the lifted inference literature.

Partial exchangeability is related to collective graphical models~\citep{Sheldon:2011} (CGMs) and cardinality-based potentials~\cite{Gupta:2007} as these models also operate on sufficient statistics. However, probabilistic inference for CGMs is not tractable and there are no theoretical results that identify tractable CGMs models. The presented work may help to identify such situations. The presented theory generalizes the statistics of cardinality-based potentials.

\subsubsection*{Lifted Inference and Exchangeability}
Our case studies identified a deep connection between lifted probabilistic inference and the concepts of partial, marginal and conditional exchangeability.
In this new context, it appears that exact lifted inference algorithms~\citep{DeSalvoBraz2005,milch2008lifted,Jha2010,VdBIJCAI11,gogate2012probabilistic,taghipour2012lifted} can all be understood as performing essentially three steps: 
(i)~construct a sufficient statistic $T(\xs)$,
(ii)~generate all possible values of the sufficient statistic, and
(iii)~count suborbit sizes for a given statistic.
For an example of (i), we can show that a compiled first-order circuit~\citep{VdBIJCAI11} or the trace of probabilistic theorem proving~\citep{gogate2012probabilistic} encode a sufficient statistic in their existential quantifier and splitting nodes.
Steps (ii) and (iii) are manifested in all these algorithms through summations and binomial coefficients.

Between Corollary~\ref{cor-unary-mln} and Theorem~\ref{theorem:twologvar-lifted}, we have re-proven almost the entire range of liftability results from the lifted inference literature~\citep{JaegerStarAI12} within the exchangeability framework, and extended these to MPE inference.
There is an essential difference though: liftability results make assumptions about the syntax (e.g., MLNs), whereas our exchangeability theorems apply to all distributions. We expect Theorem~\ref{theorem:conditional-tractable} to be used to show liftability, and more general tractability results for many other representation languages, including but not limited to the large number of statistical relational languages that have been proposed~\citep{Getoor:2007,DeRaedt2008-PILP}.


\section*{Acknowledgments}
This work was partially supported by ONR grants
\#N00014-12-1-0423, \#N00014-13-1-0720, and \#N00014-12-1-0312; NSF grants \#IIS-1118122 and \#IIS-0916161; ARO grant \#W911NF-08-1-0242; AFRL contract \#FA8750-13-2-0019; the Research Foundation-Flanders (FWO-Vlaanderen); and a Google research award to MN. 




\appendix

\section{Continued Proof of Theorem~\ref{theorem-decomp-tractable}}

\begin{proof}

To prove statements (b) and (c), we need to represent partial assignments $\Es=\es$ with $\Es \subseteq \Xs$. The partial assignments decompose into partial assignments $\Es_1=\es_1,\dots,\Es_k = \es_k$ in accordance with $\mathcal{X}$.
Each $\es_\ell$ corresponds to a string $m \in \{0,1,*\}^w$ where 
characters $0$ and $1$ encode assignments to variables in $\Es_\ell$ and $*$ encodes an unassigned variable in $\Xs_\ell-\Es_\ell$.
In this case, we say that $\es_\ell$ is of type $m$. Please note that there are $2^w$ distinct $b$ and $3^w$ distinct $m$. We say that $\xs$ \emph{agrees with} $\es$, denoted by $\xs \sim \es$, if and only if their shared variables have identical assignments. 

A \emph{completion} $\mathtt{c}$ \emph{of} $\es$ \emph{to} $\xs$ is a bijection $\mathtt{c}: \{\es_1,...,\es_k\} \rightarrow \{\xs_1,...,\xs_k\}$ such that $\mathtt{c}(\es_i) = \xs_j$ implies $\es_i \sim \xs_j$. Every completion corresponds to a unique way to assign elements in $\{0,1\}$ to unassigned variables so as to turn the partial assignment $\es$ into the full assignment $\xs$.

Let $t = (c_1,...,c_{2^w}) \in \mathcal{T}$, let $\Es \subseteq \Xs$, and let $\es \in \mathcal{D}_{\Es}$. Moreover, let $d_j = \sum_{\ell=1}^{k} [[\es_\ell=m_j]]$ for each $m_j \in \{0,1,*\}^w$. Consider the set of matrices
\begin{align*}
  \mathcal{A}_{t,\es} = \Big\{A \in \mathbb{M}(2^w,3^w) ~\Big|~ \sum_{i} a_{i,j} = d_j, \sum_{j} a_{i,j} = c_i \\
  \mbox{ and } a_{i,j} = 0 \mbox{ if } b_i \not\sim m_j\Big\}.
\end{align*}
Every $A \in \mathcal{A}_{t,\es}$ represents a set of completions from $\es$ to $\xs$ for which $T(\xs) = t$. The value $a_{i,j}$ indicates that each completion represented by $A$ maps $a_{i,j}$ elements in $\{\es_1,...,\es_k\}$ of type $m_j$ to $a_{i,j}$ elements in $\{\xs_1,...,\xs_k\}$ of type $b_i$. We write $\gamma(A)$ for the set of completions represented by $A$.

We have to prove the following statements
\begin{enumerate}
\item For every $A \in \mathcal{A}_{t,\es}$ and every $\mathtt{c} \in \gamma(A)$ there exists an $\xs \in \mathcal{D}_{\Xs}$ with $T(\xs)=t$ and $\mathtt{c}$ is a completion of $\es$ to $\xs$;
\item For every $\xs \in \mathcal{D}_{\Xs}$ with $T(\xs) = t$ and every  completion $\mathtt{c}$ of $\es$ to $\xs$ there exists an $A \in \mathcal{A}_{t,\es}$ such that $\mathtt{c} \in \gamma(A)$;
\item For all $A, A' \in \mathcal{A}_{t,\es}$ with $A \neq A'$ we have that $\gamma(A) \cap \gamma(A') = \emptyset$;
\item For every  $A \in \mathcal{A}_{t,\es}$, we can efficiently compute $|\gamma(A)|$, the size of the set of completions represented by $A$.
\end{enumerate}

To prove statement (1), let $A \in \mathcal{A}_{t,\es}$ and $\mathtt{c} \in \gamma(A)$. By the definition of $\mathcal{A}_{t,\es}$, $\mathtt{c}$ maps $a_{i,j}$ elements in $\{\es_1,...,\es_k\}$ of type $m_j$ to $a_{i,j}$ elements in $\{\xs_1,...,\xs_k\}$ of type $b_i$. By the conditions $\sum_{i} a_{i,j} = d_j$ and $\sum_{j} a_{i,j} = c_i$ of the definition of $\mathcal{A}_{t,\es}$ we have that $\mathtt{c}$ is a bijection. By the condition $a_{i,j} = 0 \mbox{ if } b_i \not\sim m_j$ of the definition of $\mathcal{A}_{t,\es}$ we have that $\mathtt{c}(\es_i) = \xs_j$ implies $\es_i \sim \xs_j$ and, therefore, $\es \sim \xs$. Hence, $\mathtt{c}$ is a completion. Moreover, $\mathtt{c}$ completes $\es$ to an $\xs$ with $\sum_{\ell=1}^{k} [[\xs_\ell=b_i]] = \sum_{j} a_{i,j} = c_i$ by the definition of $\mathcal{A}_{t,\es}$. Hence, $T(\xs) = t$.

To prove statement (2), let $\xs \in \mathcal{D}_{\Xs}$ with $T(\xs) = t$ and let $\mathtt{c}$ be a completion of $\es$ to $\xs$. We construct an $A$ with $\mathtt{c} \in \gamma(A)$ as follows. Since $\mathtt{c}$ is a completion we have that $\mathtt{c}(\es_i) = \xs_j$ implies $\es_i \sim \xs_j$ and, hence, we set $a_{i,j} = 0$ if $b_i \not\sim m_j$. For all other entries in $A$ we set $a_{i,j} = |\{ \es_j \mid \mathtt{c}(\es_j) = \xs_i\}|$. Since $\mathtt{c}$ is surjective, we have that $\sum_{j} a_{i,j} = c_i$ and since $\mathtt{c}$ is injective, we have that $\sum_{i} a_{i,j} = d_j$. Hence, $A \in \mathcal{A}_{t,\es}$.

To prove statement (3), let $A, A' \in \mathcal{A}_{t,\es}$ with $A \neq A'$. Since $A \neq A'$ we have that there exist $i,j$ such that, without loss of generality, $a_{i,j} < a'_{i,j}$. Hence, every $\mathtt{c} \in A$ maps fewer elements of type $m_j$ to elements of type $b_i$ than every $\mathtt{c}' \in A'$. Hence, $\mathtt{c} \neq \mathtt{c'}$ for every $\mathtt{c} \in A$ and every $\mathtt{c'} \in A'$.

To prove statement (4), let  $A \in \mathcal{A}_{t,\es}$. Every $\mathtt{c} \in A$ maps $a_{i,j}$ elements in $\{\es_1,...,\es_k\}$ of type $m_j$ to $a_{i,j}$ elements in $\{\xs_1,...,\xs_k\}$ of type $b_i$. Hence, the size of the set of completions represented by $A$ is, for each $1 \leq j \leq 3^w$, the number of different ways to place $a_{i,j}$ balls of color $i$, $1\leq i \leq 2^w$,  into $d_j$ urns. Hence, 
\begin{align}
  |\gamma(A)| = \prod_{j=1}^{3^w} \prod_{i=1}^{2^w} \binom{d_j-\sum_{q=1}^{i-1} a_{i,q}}{a_{i,j}}. \notag 
\end{align}

From the statements (1)-(4) we can conclude that   
\begin{align} 
|S_{t,\es}| = \sum_{A \in \mathcal{A}_{t,\es}} |\gamma(A)|. \notag 
\end{align}

This allows us to prove (b) and (c). We can construct $\mathcal{A}_{t,\es}$ in time polynomial in $n$ as follows. There are $2^w 3^w$ entries in a $2^w \times 3^w$ matrix and each entry has at most $k$ different values. Hence, we can enumerate all $k^{6^w} \leq n^{6^w}$ possible matrices $A \in \mathbb{M}(2^w,3^w)$.
We simply select those $A$ for which the conditions in the definition of $\mathcal{A}_{t,\es}$ hold. For one $A \in \mathcal{A}_{t,\es}$ we can efficiently construct one $\mathtt{c}$ and the $\xs$ that it completes $\es$ to. This proves (b). Finally, we compute $|S_{t,\es}|$. This proves~(c).
\end{proof}

\section{Proof of Theorem~\ref{theorem:monadic-decomposition}}

\begin{proof}
Let $P_1,...,P_N$ be the $N$ unary predicates of a given MLN and let $\dom = \{1,...,k\}$ be the domain. After grounding, there are $k$ ground atoms per predicate. We write $P_j(i)$ to denote the ground atom that resulted from instantiating predicate $P_j$ with domain element $i$.  Let $\mathcal{X} = \{\Xs_1,...,\Xs_k\}$ be a decomposition of the set of  ground atoms with $\Xs_i = \{P_1(i),P_2(i),...,P_N(i)\}$ for every $1 \leq i \leq k$. A renaming automorphism~\citep{hai2012automorphism,niepert2012lmcmc} is a permutation of the ground atoms that results from a permutation of the  domain elements. The joint distribution over all ground atoms remains invariant under these permutations. Consider the permutation of ground atoms that results from swapping two domain elements $i \leftrightarrow i'$. This permutation acting on the set of ground atoms permutes the components $\Xs_i$ and $\Xs_{i'}$ and leaves all other components invariant. Since this is possible for each pair $i,i' \in \{1,...,k\}$ it follows that the decomposition $\mathcal{X}$ is exchangeable.
\end{proof}

\section{Proof of Theorem~\ref{theorem:twologvar}}

\begin{proof}
Let $P_1,...,P_M$ be the $M$ unary predicates and let $Q_1,...,Q_N$ be the $N$ binary predicates of a given MLN and let $\dom = \{1,...,k\}$ be the domain. After grounding, there are $k$ ground atoms per unary and $k^2$ ground atoms per binary predicate. We write $P_{\ell}(i)$ to denote the ground atom that resulted from instantiating unary predicate $P_{\ell}$ with domain element $i$ and $Q_{\ell}(i,j)$ to denote the ground atom that resulted from instantiating binary predicate $Q_{\ell}$ with domain elements $i$ and $j$. 

Let $\Xs$ be the set of all ground atoms, let $\Ys = \{P_{\ell}(i) \mid 1 \leq i \leq k, 1\leq \ell \leq M\} \cup \{ Q_{\ell}(i,i) \mid 1 \leq i \leq k, 1 \leq \ell \leq N\}$ and let $\Zs = \Xs - \Ys$.
Moreover, let $\mathcal{Y} = \{\Ys_1,...,\Ys_k\}$ with $\Ys_i = \{P_{\ell}(i)  \mid 1 \leq \ell \leq M\} \cup \{ Q_{\ell}(i,i) \mid 1 \leq \ell \leq N\}$.
We can make the same arguments as in the proof of Theorem~\ref{theorem:monadic-decomposition} to show that $\mathcal{Y}$ is exchangeable. 

Now, we prove that the variables $\Zs$ are independently decomposable given~$\Ys$.
Let $\Zs_{i,j} = \{Q_1(i,j),...,Q_N(i,j),Q_1(j,i),...,Q_N(j,i)\}$ for all $1 \leq i < j \leq k$. Now, let $f$ be any ground formula and let $G$ be the set of ground atoms occurring in both $f$ and $\Zs$. Then, either $G \subseteq \Zs_{i,j}$ or $G \cap \Zs_{i,j} = \emptyset$ since every formula in the MLN has at most two variables. Hence, $\{\Zs_{i,j} \mid 1 \leq i < j \leq k\}$ is a decomposition of $\Zs$ with $\binom{k}{2}$ components, width $2N$, and $\Pr(\Zs,\ys)$ factorizes as
$$\Pr(\Zs,\ys) = \prod_{\substack{i,j\\i < j}} Q_{i,j}(\Zs_{i,j},\ys).$$
By the properties of MLNs, we have that the $Q_{i,j}(\Zs_{i,j},\ys)$ are computable in time exponential in the width of the decomposition but polynomial in $k$.
\end{proof}

\section{Proof of Theorem~\ref{theorem:twologvar-lifted}}

\begin{proof}
  Suppose that the query $\es$ contains a bounded number of binary atoms whose arguments are constants from the set~$K$.
  Consider the set of variables $\Qs$ consisting of all unary atoms whose argument comes from $K$, and all binary atoms whose arguments both come from $K$. 
  The unary atoms in $\Qs$ are no long marginally exchangeable, because their arguments can appear asymmetrically in~$\es$. 
  We can now answer the query by simply enumerating all states $\qs$ of $\Qs$ and performing inference in each $\Pr(\Ys,\Zs,\qs)$ separately, were all variables $\Ys$ have again become marginally exchangeable, and all variables $\Zs$ have become independently decomposable given $\Ys$. 
  The construction of $\Ys$ and $\Zs$ is similar to the proof of Theorem~\ref{theorem:twologvar}, except that some additional binary atoms are now in $\Ys$ instead of $\Zs$. These atoms have one argument in $K$, and one not in $K$, and are treated as unary.
  When we bound the number of binary atoms in the query, the size of $\Qs$ will not be a function of the domain size, and enumerating over all states $\qs$ is domain-lifted.
\end{proof}

\bibliographystyle{aaai}
\bibliography{references} 

\end{document}